\pdfoutput=1
\documentclass[letterpaper]{article}
\usepackage{aaai}
\usepackage{times}
\usepackage{helvet}
\usepackage{courier}
\usepackage{algorithm}
\usepackage[noend]{algpseudocode}
\usepackage{bbm}
\usepackage{url}
\usepackage{wrapfig}
\usepackage{bm}
\usepackage{color}
\usepackage{graphicx}
\usepackage{amsthm}
\usepackage{amsmath}
\usepackage{amssymb}
\usepackage{caption}
\usepackage{subcaption}
\usepackage{adjustbox}
\usepackage{enumitem}
\usepackage{lipsum}
\graphicspath{ {./figs/} }


\newtheorem{proposition}{Proposition}
\newtheorem{corollary}{Corollary}


\newcommand*\samethanks[1][\value{footnote}]{\footnotemark[#1]}

\frenchspacing
\setlength{\pdfpagewidth}{8.5in}
\setlength{\pdfpageheight}{11in}
\pdfinfo{
/Title Conservativeness of untied auto-encoders
/Author Daniel Jiwoong Im, Mohamed Ishmael Diwan Belghanzi, Roland Memisevic}
\setcounter{secnumdepth}{0}  
 \begin{document}
%
\title{Conservativeness of untied auto-encoders}
\author{
    Daniel Jiwoong Im\thanks{Authors constributed equally.}\\
Montreal Institute for Learning Algorithms\\
University of Montreal\\
Montreal, QC, H3C 3J7\\
\texttt{imdaniel@iro.umontreal.ca}\\
\And
Mohamed Ishmael Diwan Belghanzi\samethanks[1] \\
HEC Montreal\\
3000 Ch de la Cte-Ste-Catherine\\
Montreal, QC, H3T 2A7\\
\texttt{mohamed.2.belghazi@hec.ca}\\
\And
Roland Memisevic \\
Montreal Institute for Learning Algorithms\\
University of Montreal\\
Montreal, QC, H3C 3J7\\
\texttt{roland.memisevic@umontreal.ca}}

\maketitle
\begin{abstract}
\begin{quote}
We discuss necessary and sufficient conditions for an auto-encoder to define a
conservative vector field, in which case it is associated with an
energy function akin to the unnormalized log-probability of the data.
We show that the conditions for conservativeness are more general than for 
encoder and decoder weights to be the same (``tied weights''), and that they also 
depend on the form of the hidden unit activation function, but 
that contractive training criteria, such as denoising, will enforce these 
conditions locally. 
Based on these observations, we show how we can use auto-encoders to extract the 
conservative component of a vector field.
\end{quote}
\end{abstract}
\section{Introduction}
An auto-encoder is a feature learning model that learns to reconstruct its inputs by going 
though one or more capacity-constrained ``bottleneck''-layers. 
Since it defines a mapping $r: \mathbb{R}^n \rightarrow \mathbb{R}^n$, an 
auto-encoder can also be viewed 
as dynamical system, that is trained to have fixed points at the data \cite{Seung1998}. 
Recent renewed interest in the dynamical systems perspective 
led to a variety of results that help clarify the role of auto-encoders and their relationship to 
probabilistic models. For example, \cite{Vincent2008,swersky2011autoencoders} showed that 
training an auto-encoder to denoise corrupted inputs is closely related to performing score 
matching \cite{Hyvarinen_scorematching} in an undirected model. 
Similarly, \cite{Guillaume2014} showed that training the model to denoise inputs, 
or to reconstruct them under a suitable choice of regularization penalty, 
lets the auto-encoder approximate the derivative of the empirical data density. 
And \cite{Kamyshanska2013} showed that, regardless of training criterion, any auto-encoder whose weights 
are tied (decoder-weights are identical to the encoder weights) can be written as the derivative of a scalar
``potential-'' or energy-function, which in turn can be viewed as unnormalized data log-probability. 
For sigmoid hidden units the potential function is exactly identical to the free energy of an RBM, 
which shows that there is tight link between these two types of model. 

The same is not true for untied auto-encoders, for which it has not been clear whether such an 
energy function exists. It has also not been clear under which conditions an energy function 
exists or does not exist, or even how to define it in the case where decoder-weights 
differ from encoder weights. 
In this paper, we describe necessary and sufficient conditions for the existence of an energy 
function and we show that suitable learning criteria will lead to an auto-encoder that satisfies 
these conditions at least locally, near the training data. We verify our results experimentally. 
We also show how we can use an auto-encoder to extract the conservative part of a vector field.

\begin{figure*}[t]
    \includegraphics[width=1.0\textwidth]{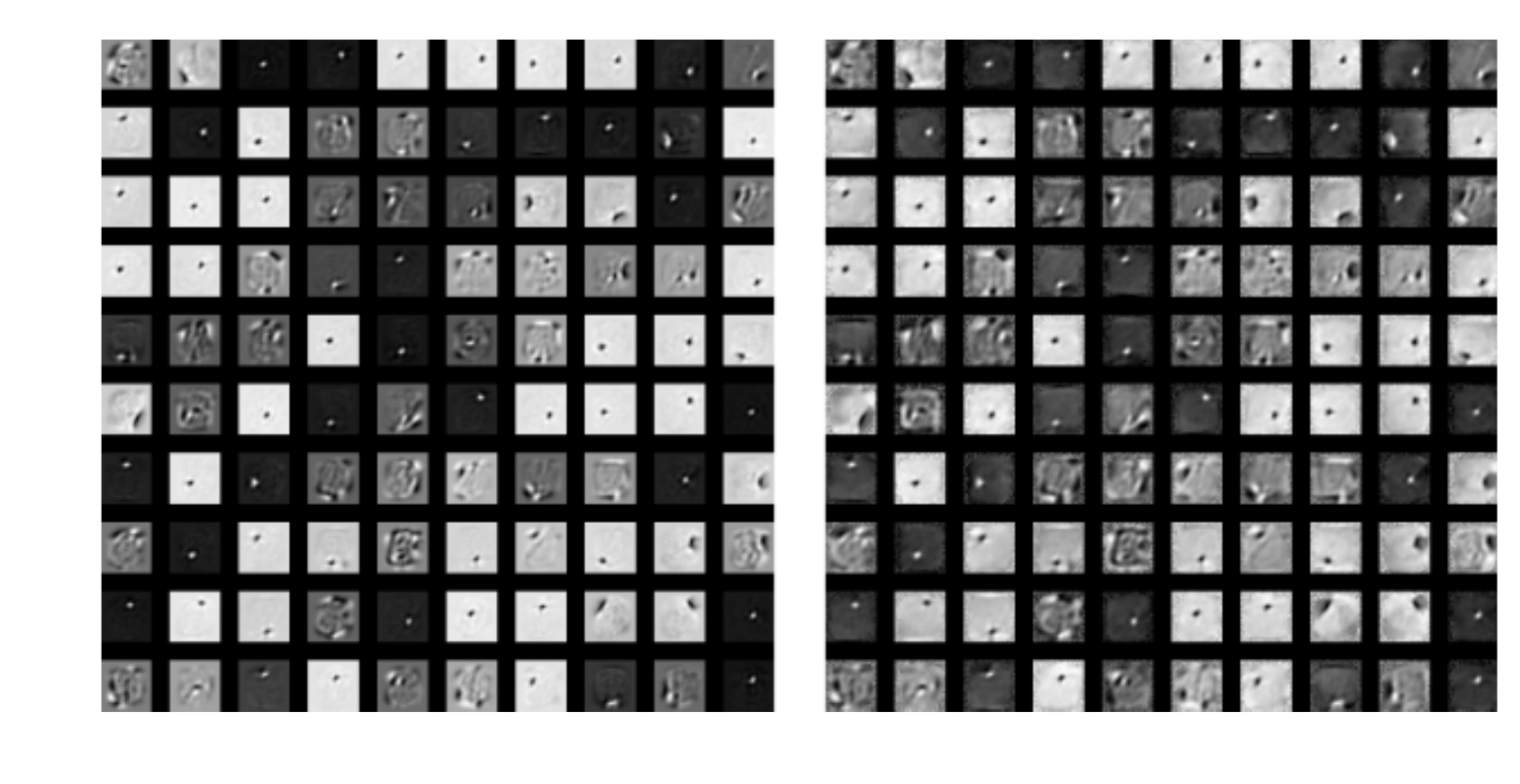}
    \vspace{-1.2cm}
    \caption{Encoder weights $W$ (left) and decoder weights $R^T$ (right).}
    \label{fig:weights}
    \vspace{-0.5cm}
\end{figure*}
\section{Background}
We will focus on auto-encoders of the form 
\begin{equation}
    r(\mathbf{x}) = Rh\big(W^\mathrm{T}\mathbf{x} + \mathbf{b} \big) + \mathbf{c}
\label{aedefinition}
\end{equation}
where $\mathbf{x}\in \mathbb{R}^n$ is an observation, $R$ and $W$ are decoder and encoder weights, 
respectively, $\mathbf{b}$ and $\mathbf{c}$ are biases, and $h(\cdot)$ is an elementwise 
hidden activation function.
An auto-encoder can be identified with its vector field, $r(\mathbf{x})-\mathbf{x}$, which is the 
set of vectors pointing from observations to their reconstructions. 
The vector field is called conservative if it can be written as the gradient of a scalar 
function $F(\mathbf{x})$, called potential or energy function:  
\begin{align}
    r(\mathbf{x}) - \mathbf{x} = \nabla F(\mathbf{x}) 
    \label{eqn:vec_field}
\end{align}
The energy function corresponds to the unnormalized probability of data.

In this case, we can integrate the vector field to find the energy function \cite{Kamyshanska2013}. 
For an auto-encoder with tied weights and real-valued observations it takes the form 
\begin{equation}
    F(\mathbf{x}) = \int h(\mathbf{u})d\mathbf{u} - \frac{1}{2}\|\mathbf{x}-\mathbf{c}\|^2_2 + \text{const}
\label{eq:energygeneral}
\end{equation}
where $\mathbf{u} = W^\mathrm{T}\mathbf{x}+\mathbf{b}$ is an auxiliary variable and $h(\cdot)$ 
can be any elementwise activation function with known anti-derivative. 
For example, the energy function of an auto-encoder with sigmoid activation function is 
identical to the (Gaussian) RBM free energy \cite{hinton2010practical}:
\begin{equation}
    F_{\text{sig}}(\mathbf{x}) = \sum_k \log \left( 1+\exp \left(W^T_{\cdot k}\mathbf{x}+b_k\right)\right) - \frac{1}{2}\|\mathbf{x}-\mathbf{c}\|^2_2 + \text{const} 
\end{equation}
A sufficient condition for the existence of an energy function is that the weights are tied 
\cite{Kamyshanska2013}, but it has not been clear if this is also necessary. 
A peculiar phenomenon in practice is that it is very common for decoder and encoder 
weights to be ``similar'' (albeit not necessarily tied) in response to training.
An example of this effect is shown in Figure~\ref{fig:weights}\footnote{We 
found these kinds of behaviours not only for unwhitened, but also for binary data.}.
This raises the question of why this happens, 
and whether the quasi-tying of weights has anything to do 
with the emergence of an energy function, and if yes, whether there is a way to compute 
the energy function despite the lack of exact symmetry. 
We shall address these questions in what follows. 
\section{Conservative auto-encoders}\label{sec:cons_ae}
One of the central objectives of this paper is understanding the conditions
for an auto-encoder to be conservative\footnote{The expressions,
``conservative vector field'' and ``conservative auto-encoders''
will be used interchangeably.} and thus to have a well-defined energy function.
In the following subsection we derive and explain said conditions.

\subsection{Conditions for conservative auto-encoders}
\begin{proposition}
\label{generalcondition}
    Consider an $m$-hidden-layer auto-encoder defined as
    \begin{multline}
        r(\mathbf{x}; \theta) = W^{(m)} h^{(m)} 
                \Big( W^{(m-1)} h^{(m-1)} \\ 
            \left(\cdots W^{(1)} h^{(1)}\left(\mathbf{x}\right)\cdots \Big) + \mathbf{c}^{(m-1)}\right) + \mathbf{c}^{(m)}  , \nonumber
    \end{multline}
    where $\theta = \cup^{m}_{k=0} \theta^{(k)}$ such that
    $\theta^{(k)}=\lbrace W^{(k)}, \mathbf{c}^{(k)} \rbrace$ are the parameters of the model,
    and $h^{(k)}(\cdot)$ is a smooth elementwise activation function at layer $k$. Then the
    auto-encoder is said to be conservative over a smooth simply connect domain $K \subseteq
    \mathbb{R}^{D}$ if and only if
    its reconstruction's Jacobian $\frac{\partial r(\mathbf{x})}{\partial
      \mathbf{x}}$ is symmetric for all $\mathbf{x} \in K$.
\end{proposition}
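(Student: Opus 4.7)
The plan is to recognize this as essentially a restatement of the classical Poincaré lemma for vector fields on simply connected domains, applied to the auto-encoder's vector field $v(\mathbf{x}) := r(\mathbf{x}) - \mathbf{x}$. First I would observe that, since $I$ is trivially symmetric, the Jacobian $\tfrac{\partial v}{\partial \mathbf{x}} = \tfrac{\partial r}{\partial \mathbf{x}} - I$ is symmetric if and only if $\tfrac{\partial r}{\partial \mathbf{x}}$ is symmetric; so I can freely translate between the statement about $r$ and the vector field $v$.

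Next, I would prove the \emph{only if} direction. Assuming $v = \nabla F$ for some potential $F$ on $K$, the Jacobian of $v$ equals the Hessian of $F$. Because each activation $h^{(k)}$ is assumed smooth, the composition defining $r$ is at least $C^2$, hence so is $F$, and Clairaut/Schwarz's theorem gives symmetry of the Hessian. This part is a one-line argument; the only thing to verify is the smoothness chain from the activations up through $F$.

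For the \emph{if} direction, I would invoke the Poincaré lemma: on a smooth simply connected domain, a $C^1$ vector field whose Jacobian is symmetric (equivalently, a closed 1-form) is the gradient of a scalar function. Concretely, I would fix a basepoint $\mathbf{x}_0 \in K$ and define
\begin{equation}
F(\mathbf{x}) := \int_{\gamma} v(\mathbf{y}) \cdot d\mathbf{y},
\end{equation}
where $\gamma$ is any piecewise-smooth path in $K$ from $\mathbf{x}_0$ to $\mathbf{x}$. Path-independence follows from Stokes' theorem: any two such paths bound (after using simple connectedness to fill in a homotopy/2-chain) a surface on which the integral of the curl $\tfrac{\partial v_i}{\partial x_j} - \tfrac{\partial v_j}{\partial x_i}$ vanishes by the symmetry hypothesis. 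Differentiating $F$ then recovers $v$, so $r(\mathbf{x}) - \mathbf{x} = \nabla F(\mathbf{x})$ on $K$.

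There isn't really a hard obstacle here: the result is classical and the auto-encoder structure plays no role beyond guaranteeing enough smoothness. The one subtlety worth flagging explicitly is that simple connectedness of $K$ is essential in the ``if'' direction (without it one can construct closed-but-not-exact fields such as the planar vortex), whereas the ``only if'' direction holds on any open set. I would therefore state clearly which hypothesis is used where, and note that in practice $K = \mathbb{R}^D$ will usually be the domain of interest, which is of course simply connected.
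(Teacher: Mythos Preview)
Your proposal is correct and is essentially the same argument as the paper's: both reduce the claim to the Poincar\'e lemma (closed $\Leftrightarrow$ exact on a simply connected domain), with the ``only if'' direction coming from symmetry of the Hessian. The only cosmetic difference is that the paper phrases everything in the language of differential forms (writing $\alpha = \sum_i r_i\,dx_i$ and computing $d\alpha$), whereas you use the equivalent vector-calculus formulation via line integrals and Stokes; your explicit observation that passing between $r$ and $v = r - \mathbf{x}$ is harmless because $I$ is symmetric is a nice clarification the paper leaves implicit.
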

A formal proof is provided in the Appendix.

A region $K$ is said to be \emph{simply connected} if and only if any simple
curve in $K$ can be shrunk to a point.
It is not always the case that a region of $\mathbb{R}^{D}$ is simply
connected. For instance, a curve surrounding a punctured circle in
$\mathbb{R}^{2}$ cannot be continuously deformed to a point without crossing
the punctured region. However, as long as we make the reasonable assumption
that the activation function does not have a continuum of discontinuities, we should
not run into trouble. This makes our analysis valid for activation functions
with cusps such as ReLUs.

Throughout the paper, our focus will be on one--hidden-layer auto-encoders. Although
the necessary and sufficient conditions for their conservativeness are a special
case of the above proposition, it is worthwhile to derive them explicitly.
\begin{proposition}
\label{hatcondition}
Let $r(x)$ be a one-hidden-layer auto-encoder with $D$ dimensional inputs and $H$
hidden units,
    \begin{align*}
        r(\mathbf{x}) = R h \left(W^{T} \mathbf{x} + \mathbf{b} \right) + \mathbf{c}  , \nonumber
    \end{align*}
    where $R, W, \mathbf{b}, \mathbf{c}$ are the parameters of the
    model. Then $r(\mathbf{x})$ defines a conservative vector field over a smooth simply connect domain $K \subseteq
    \mathbb{R}^{D}$ if and only if $RD_{h'}W^T$ is symmetric for
    all $\mathbf{x} \in K$
    where $D_{h'}= \text{ diag}\left( h'(\mathbf{x}) \right)$.
\end{proposition}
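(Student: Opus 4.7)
The plan is to obtain Proposition \ref{hatcondition} as a direct specialization of Proposition \ref{generalcondition}. With $m=1$, the one-hidden-layer architecture of Proposition \ref{hatcondition} fits the multilayer template (reading the encoder as $W^{(1)} = W^T$ together with the bias $\mathbf{b}$, the activation $h^{(1)}=h$, and the decoder as $W^{(0)}=R$, $\mathbf{c}^{(0)}=\mathbf{c}$). Hence Proposition \ref{generalcondition} reduces the claim to showing that the Jacobian $\partial r/\partial \mathbf{x}$ is exactly $R D_{h'} W^T$; conservativeness on the simply connected domain $K$ is then equivalent to the pointwise symmetry of this matrix for every $\mathbf{x}\in K$.

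Next, I would compute the Jacobian by the chain rule. Introducing the auxiliary pre-activation $\mathbf{u}(\mathbf{x}) = W^T\mathbf{x}+\mathbf{b}$, we have $r(\mathbf{x}) = R\,h(\mathbf{u}(\mathbf{x})) + \mathbf{c}$. Since $h$ is applied coordinatewise, its Jacobian with respect to $\mathbf{u}$ is the diagonal matrix $D_{h'} = \mathrm{diag}(h'(\mathbf{u}(\mathbf{x})))$, and $\partial \mathbf{u}/\partial \mathbf{x} = W^T$ because the map $\mathbf{x}\mapsto W^T\mathbf{x}+\mathbf{b}$ is affine. Composing the three factors gives $\partial r/\partial \mathbf{x} = R D_{h'} W^T$, and the proposition follows by invoking Proposition \ref{generalcondition}.

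There is essentially no technical obstacle; the argument is bookkeeping. The one subtlety worth flagging is notational: the statement abbreviates $D_{h'}=\mathrm{diag}(h'(\mathbf{x}))$, but what is actually meant, and what the chain rule produces, is $\mathrm{diag}(h'(W^T\mathbf{x}+\mathbf{b}))$, the derivative evaluated at the pre-activation. I would make this explicit at the start of the proof. The only other point to emphasize is that Proposition \ref{generalcondition} demands symmetry of the Jacobian at every point of $K$, not just at a single $\mathbf{x}$, so the condition $RD_{h'}W^T = W D_{h'} R^T$ must be checked uniformly over the domain; since $D_{h'}$ varies with $\mathbf{x}$ (through $h'$), this is a genuine constraint on how $R$ and $W$ interact with the activation pattern, which is precisely the phenomenon the rest of the paper sets out to analyze.
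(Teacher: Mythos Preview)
Your proposal is correct and follows essentially the same route as the paper: both invoke Proposition~\ref{generalcondition} to reduce conservativeness to symmetry of the Jacobian, then compute $\partial r/\partial \mathbf{x}=RD_{h'}W^{T}$ (the paper does this componentwise, you do it via the matrix chain rule, which is an inessential difference). Your remark clarifying that $D_{h'}$ should be read as $\mathrm{diag}\big(h'(W^{T}\mathbf{x}+\mathbf{b})\big)$ is a useful addition.
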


\begin{proof}
Following proposition 1, an auto-encoder defines a conservative vector field if
and only if its Jacobian is symmetric for all $\mathbf{x} \in K$.
\begin{equation}
 \frac{\partial \mathbf{r}(\mathbf{x})}{\partial \mathbf{x}} = \left(\frac{\partial
    \mathbf{r}(\mathbf{x})}{\partial \mathbf{x}}\right)^T
\end{equation}
By explicitly calculating the Jacobian, this is equivalent to
\begin{equation}
 (\forall 1 \leq i < j \leq D) \, \sum_{l=0}^H \big( R_{jl}W_{li} - R_{il}W_{lj} \big)
h'_l\left(\mathbf{x}\right) = 0
\end{equation}

Defining $D_{h^{'}} = diag(h^{'}(\mathbf{x}))$, this holds if and only if
\begin{equation}
 RD_{h^{'}}W^{T} = WD_{h^{'}}R^{T} \, \label{hatequation}
\end{equation}
\end{proof}

For one-hidden-layer auto-encoders with tied weights, Equation \ref{hatequation} 
holds regardless of the choice of activation function $h$ and $\mathbf{x}$.

\begin{corollary}
\label{tiedcorollary}
An auto-encoder with tied weights always defines a conservative vector field.
\end{corollary}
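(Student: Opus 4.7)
The plan is to invoke Proposition~\ref{hatcondition} and reduce the claim to a one-line matrix identity. With tied weights we have $R = W$ in the parameterization $r(\mathbf{x}) = Rh(W^{T}\mathbf{x}+\mathbf{b})+\mathbf{c}$, so the matrix whose symmetry we must verify becomes $WD_{h'}W^{T}$.

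Next I would observe that $D_{h'}$ is diagonal, hence symmetric, i.e.\ $D_{h'}^{T}=D_{h'}$. Therefore
\begin{equation*}
\bigl(WD_{h'}W^{T}\bigr)^{T} = (W^{T})^{T}D_{h'}^{T}W^{T} = WD_{h'}W^{T},
\end{equation*}
which is exactly the condition in Equation~\ref{hatequation}. Since this holds for every $\mathbf{x}\in K$ regardless of the choice of activation function $h$ (the argument only uses that $D_{h'}$ is diagonal, not any specific property of $h'$), Proposition~\ref{hatcondition} immediately yields conservativeness on any smooth simply connected domain $K\subseteq \mathbb{R}^{D}$.

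There is essentially no obstacle here; the corollary is a direct specialization of the preceding proposition, and the only thing to note explicitly is that the symmetry of the diagonal matrix $D_{h'}$ is what makes the argument independent of $h$ and $\mathbf{x}$. I would therefore keep the proof to two or three lines, simply substituting $R=W$ into Equation~\ref{hatequation} and transposing.
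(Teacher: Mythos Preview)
Your proposal is correct and mirrors exactly what the paper does: it observes that with $R=W$ the condition in Equation~\ref{hatequation} becomes $WD_{h'}W^{T}=WD_{h'}W^{T}$, which holds trivially for all $\mathbf{x}$ and any activation $h$ since $D_{h'}$ is diagonal. The paper states this in one sentence immediately before the corollary rather than writing out the transpose computation, but the argument is identical.
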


Proposition 2 illustrates that the set of all one-layered tied auto-encoders is actually a subset of the set of
all conservative one-layered auto-encoders. Moreover, the inclusion is strict. That is to
say there are untied conservative auto-encoders that are not trivially equivalent
to tied ones. As example, let us compare the parametrization of tied and
conservative untied linear one-layered auto-encoders. $r_{untied}(\mathbf{x})$ in Eq.~\ref{hatequation} 
defines a conservative vector field if and only
$RW^{T}=WR^{T}$ which offers a richer parametrization than the tied linear
auto-encoder $r_{tied}(\mathbf{x}) = WW^{T}\mathbf{x}$.

In the following section we explore in more detail and generality of the
parametrization imposed by the conditions above.

\subsection{Understanding the symmetricity condition}
Note that if symmetry holds in the Jacobian of an auto-encoder's
reconstruction function, then the vector field is conservative.
A sufficient condition for symmetry of the Jacobian is that $R$ can be written
    \begin{align}
        R = CWD_{h'}E.
        \label{eqn:suff_cond3}
    \end{align}
where $C$ and $E$ are symmetric matrices, and $C$ commutes with $WD_{h'}ED_{h'}W^T$, as this will 
ensure symmetry of the partial derivatives:
\begin{align}
    \frac{\partial \mathbf{r}(\mathbf{x})}{\partial \mathbf{x}} 
    & = RD_{h'}W^T = CWD_{h'} E D_{h'}W^T \\
    & = WD_{h'}E D_{h'}W^TC 
     = WD_{h'}R^T  
     = \left(\frac{\partial \mathbf{r}(\mathbf{x})}{\partial \mathbf{x}}\right)^T\nonumber.
\end{align}

The case of tied weights ($R=W$) follows if 
$C$ and $E$ are the identity, since then
$\frac{\partial \mathbf{r}(\mathbf{x})}{\partial \mathbf{x}} = RD_{h'}W^T=WD_{h'}W^T$.

Notice that $R=CWD_{h'}$ and $R = WD_{h'}E$ are
further special cases of the condition $R=CWD_{h'}E$ when $E$ is the identity (first case)
or $C$ is the identity (second case).
Moreover, we can also find matrices $E$ and $C$ given the parameters $W$ and $R$,
which is shown in Section 1.2 of the supplementary 
material\footnote{\url{www.uoguelph.ca/~imj/files/conservative_ae_supplementary.pdf}}.
\section{Conservativeness of trained auto-encoders}\label{sec:sym}
Following \cite{Guillaume2014} we will first assume that the true data distribution is known 
and the auto-encoder is trained. We then
analyze the conservativeness of auto-encoders around fixed points of the data manifold. After that, we will proceed to 
empirically investigate and explain the tendency of trained auto-encoders to become conservative away from the data manifold. 
Finally, we will use the obtained results to explain why the product of the encoder and 
decoder weights become increasingly symmetric in response to training.  

\begin{figure*}[t]
    \begin{minipage}{0.48\textwidth}
        \includegraphics[width=1.0\textwidth]{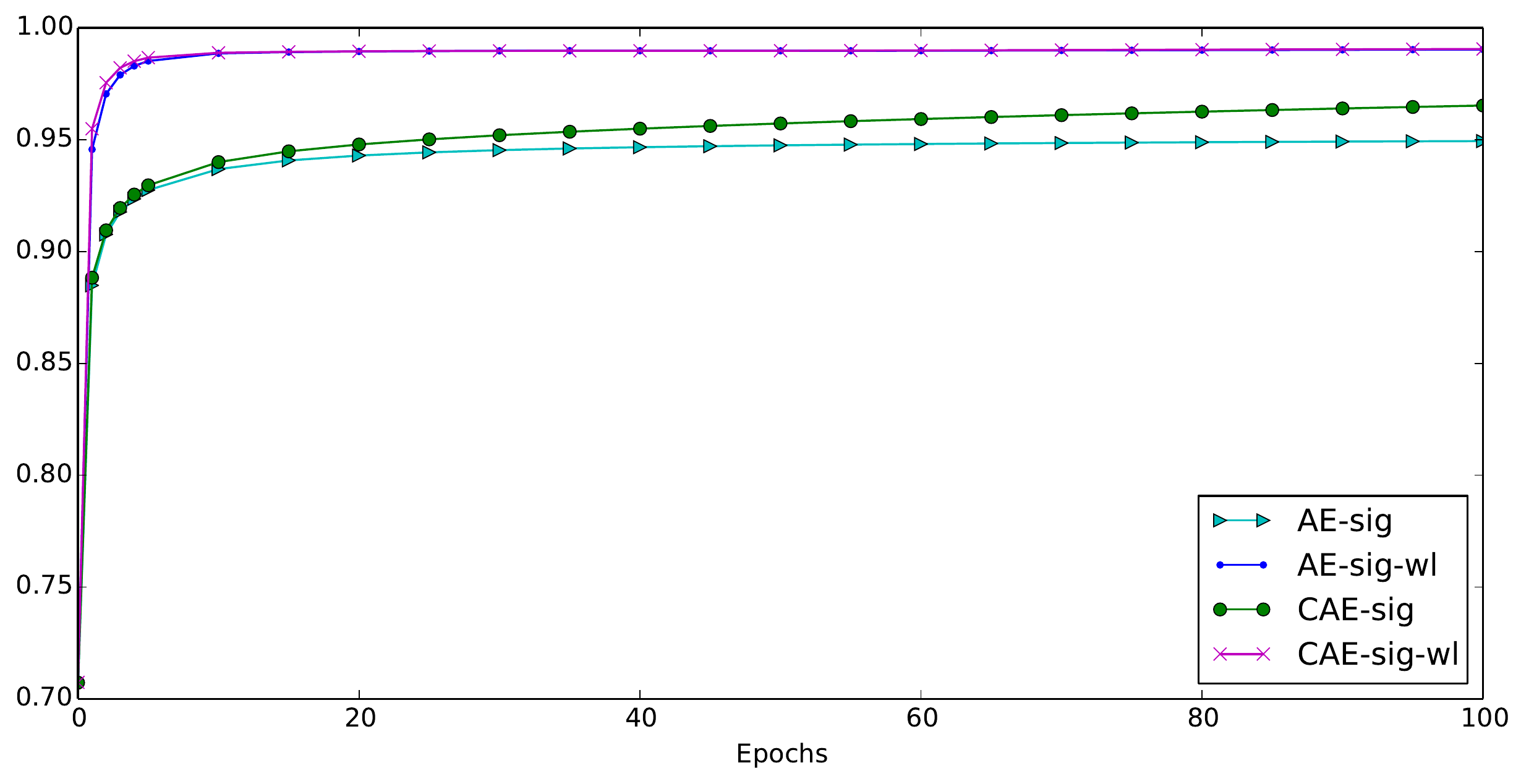}
        \vspace{-0.6cm}
        \subcaption{Symmetricity for $RD_{h'}W$ with sigmoid units}
        \label{fig:sym_sig_RDW}
    \end{minipage}
    \begin{minipage}{0.48\textwidth}
        \includegraphics[width=1.0\textwidth]{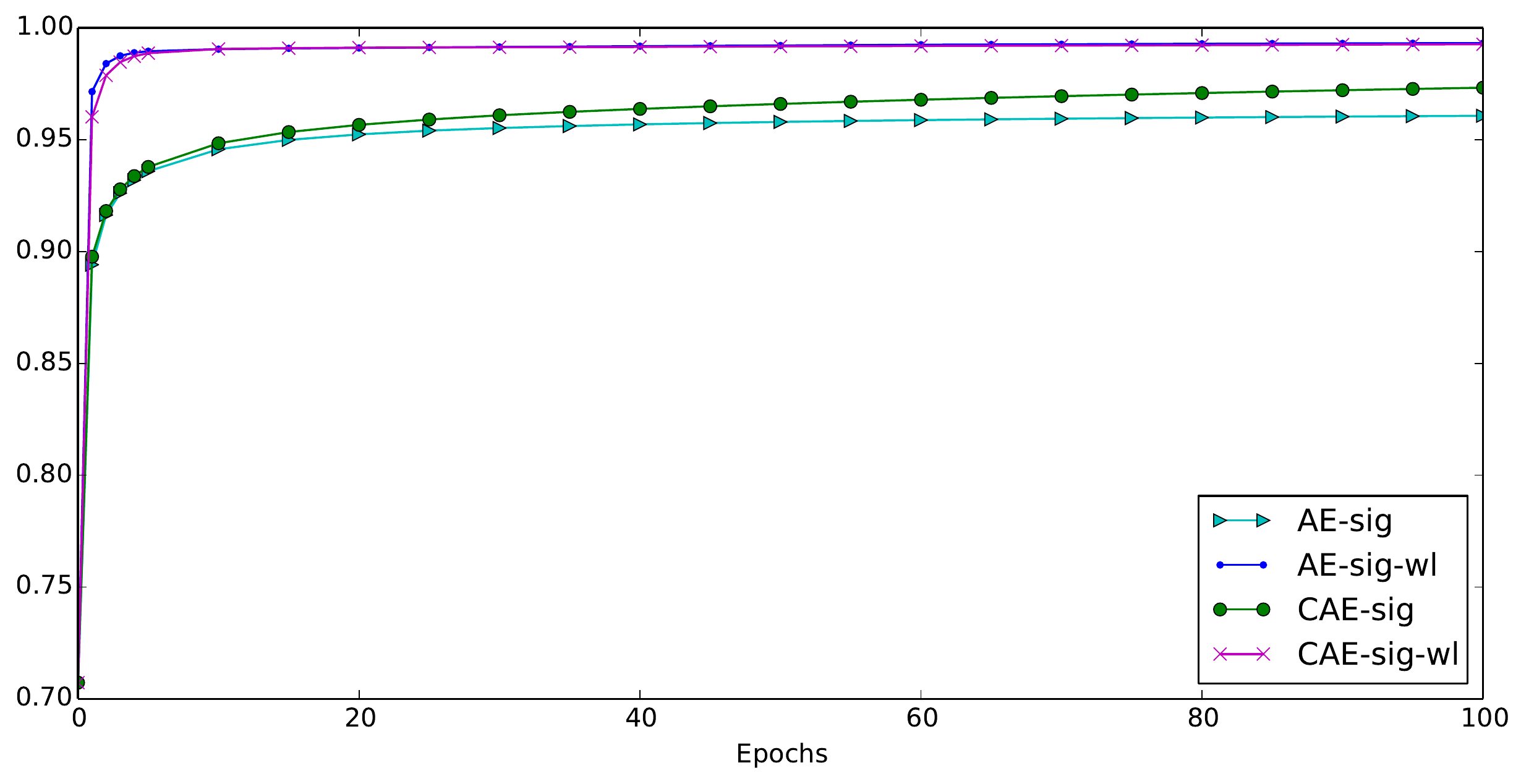}
        \vspace{-0.6cm}
        \subcaption{Symmetricity for $RW$ with sigmoid units}
        \label{fig:sym_sig_RW}
    \end{minipage}\\
    \begin{minipage}{0.48\textwidth}
        \includegraphics[width=1.0\textwidth]{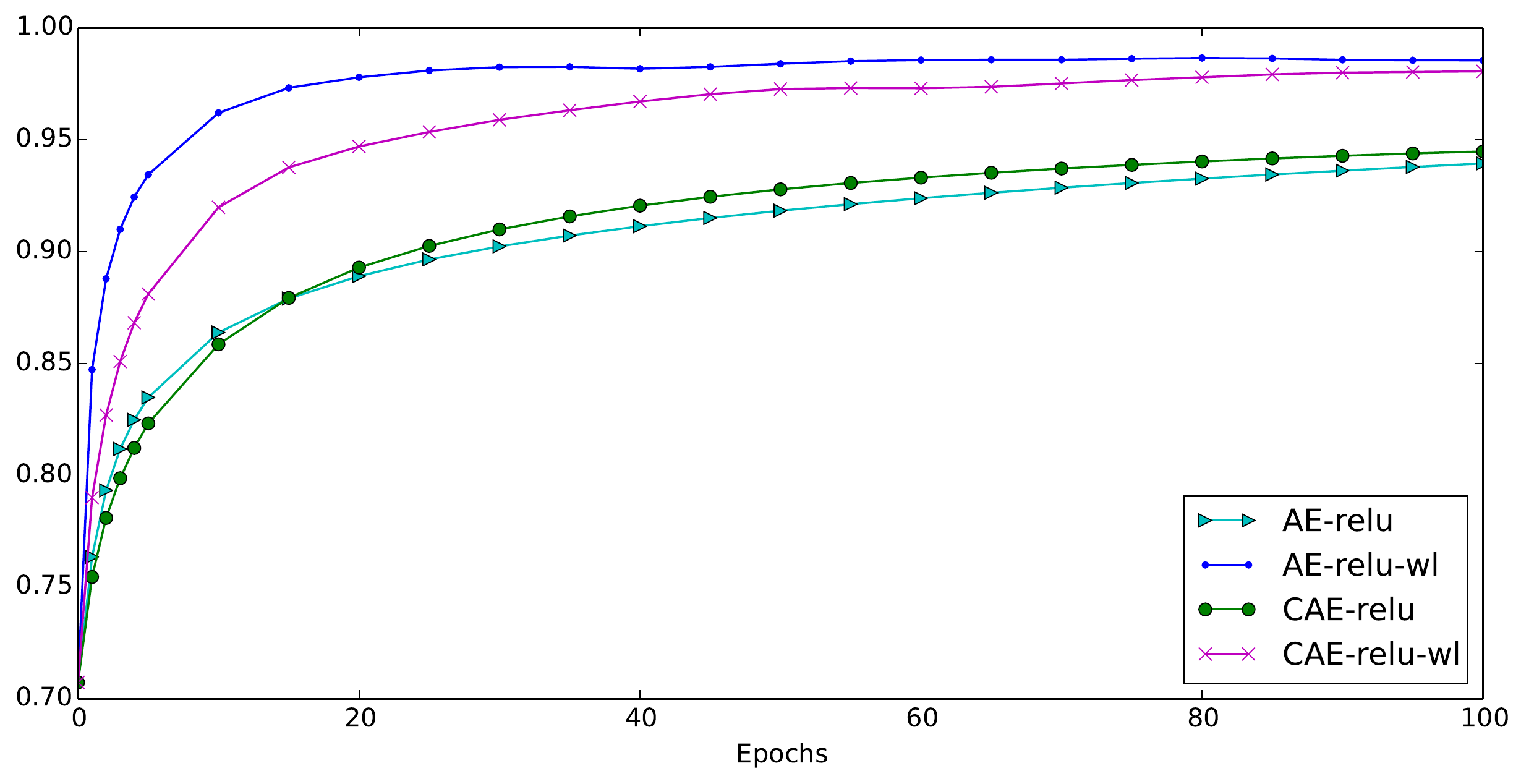}
        \vspace{-0.6cm}
        \subcaption{Symmetricity for $RD_{h'}W$ with ReLU units}
        \label{fig:sym_relu_RDW}
    \end{minipage}
    \begin{minipage}{0.48\textwidth}
        \includegraphics[width=1.0\textwidth]{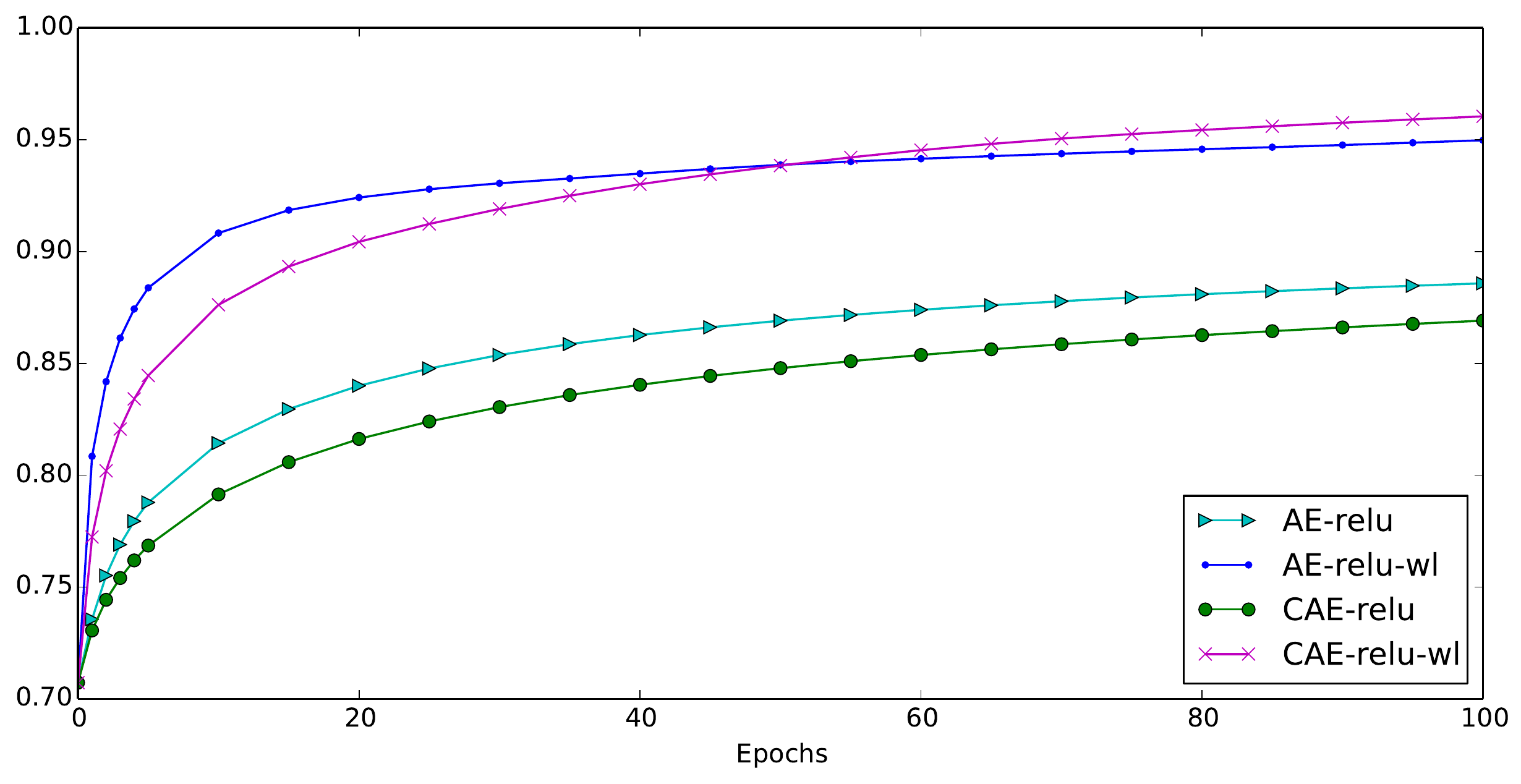}
        \vspace{-0.6cm}
        \subcaption{Symmetricity for $RW$ with ReLU units}
        \label{fig:sym_relu_RW}
    \end{minipage}
    \caption{The symmetricity distance of
        $\frac{\partial r(\mathbf{x})}{\partial \mathbf{x}}$ and
        the symmetricity distance of $RW^T$ for sigmoid activation and
        ReLU activation are illustrated over the learning time of the auto-encoder.}
    \label{fig:sym_various_activations}
    \vspace{-0.6cm}
\end{figure*}
\subsection{Local Conservativeness}
Let $r(\mathbf{x})$ be an auto-encoder that minimizes a contraction-regularized squared loss function
averaged over the true data distribution $p$,
\begin{equation}
\label{trueloss}
L_\sigma(\mathbf{x}) = \int_{\mathbb{R}^d} p(\mathbf{x})\left[ \|r(\mathbf{x})-\mathbf{x}\|^2_2
+ \bm{\epsilon} \|\frac{\partial r(\mathbf{x})}{\partial \mathbf{x}}\|^2_2\right] d\mathbf{x}
\end{equation}

A point $\mathbf{x} \, \in \mathbb{R}^{d}$ is a fixed point of the auto-encoder if and
only if $r(\mathbf{x}) = \mathbf{x}$. 

\begin{proposition}
\label{trainedconservative}
Let $r(\mathbf{x})$ be an untied one-layer auto-encoder minimizing Equation~\ref{trueloss}. Then
$r(\mathbf{x})$ is locally conservative as the contraction parameter tends to zero. 
\end{proposition}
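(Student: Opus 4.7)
The plan is to reduce the statement to the Jacobian-symmetry criterion of Proposition 2, and then use the asymptotic characterization of optimal contraction-regularized reconstructors in terms of the data score, which is already available from \cite{Guillaume2014}. Concretely, I would invoke the result that as $\epsilon \to 0$ the minimizer of $L_\sigma$ admits, for $\mathbf{x}$ in the support of $p$, the first-order expansion
\begin{equation*}
r(\mathbf{x}) - \mathbf{x} \;=\; \epsilon\, \frac{\partial \log p(\mathbf{x})}{\partial \mathbf{x}} \;+\; o(\epsilon).
\end{equation*}
This expansion is obtained by writing the Euler--Lagrange (functional derivative) equation for $L_\sigma$ and Taylor-expanding in $\epsilon$; intuitively, the unregularized loss forces $r(\mathbf{x})=\mathbf{x}$ on the support, and the contractive penalty perturbs this identity in the direction of the score.

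The remaining steps are then short. First, I would differentiate the expansion with respect to $\mathbf{x}$, obtaining
\begin{equation*}
\frac{\partial r(\mathbf{x})}{\partial \mathbf{x}} \;=\; I \;+\; \epsilon\, \frac{\partial^{2} \log p(\mathbf{x})}{\partial \mathbf{x}^{2}} \;+\; o(\epsilon).
\end{equation*}
Second, I would observe that the leading correction is a Hessian of a scalar function, hence symmetric whenever $\log p$ is $C^{2}$ (Schwarz's theorem). Third, since $I$ is trivially symmetric, the whole Jacobian is symmetric up to $o(\epsilon)$ at every point in the support of $p$. Finally, I would invoke Proposition 2: symmetry of $\partial r/\partial \mathbf{x}$ is equivalent to the conservativeness condition $RD_{h'}W^{T} = WD_{h'}R^{T}$ on any simply connected domain, which here is taken to be a neighborhood of the data manifold. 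This delivers local conservativeness in the $\epsilon \to 0$ limit.

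The main obstacle is making the first step rigorous for a \emph{parametric} one-hidden-layer autoencoder rather than for an unconstrained reconstruction function. The Alain--Bengio style derivation is naturally a calculus-of-variations argument over all sufficiently smooth $r$, so I would need either (i) to assume enough capacity that the optimal parametric $r$ agrees with the unconstrained optimum up to an $o(\epsilon)$ error on the support of $p$, or (ii) to restrict attention to points where the parametric family can match the score and its derivative. The qualifier \emph{locally} in the proposition is what lets me get away with (ii): away from the data manifold $p(\mathbf{x})$ can be arbitrarily small, the expansion loses meaning, and the optimization places no constraints on $r$, so conservativeness need not hold there. A secondary technicality is the interchange of differentiation and the $o(\epsilon)$ remainder, which requires that the remainder be uniform in a neighborhood of $\mathbf{x}$; assuming $\log p \in C^{2}$ and $r$ smooth in $\mathbf{x}$, this is standard but should be stated explicitly.
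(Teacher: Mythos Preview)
Your proposal is correct and follows essentially the same route as the paper: invoke the Alain--Bengio result that the optimal regularized reconstruction satisfies $r(\mathbf{x})-\mathbf{x}\approx\epsilon\,\nabla\log p(\mathbf{x})$, differentiate to obtain that $\partial r/\partial\mathbf{x}-I$ equals $\epsilon$ times the Hessian of $\log p$, and conclude Jacobian symmetry and hence local conservativeness via Proposition~2. The paper phrases the expansion around fixed points on the data manifold rather than directly on the support, and is less explicit than you are about the parametric-vs-nonparametric caveat and the remainder-differentiation issue, but the argument is the same.
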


Taking a first order Taylor expansion of
$r(\mathbf{x})$ around a fixed point $\mathbf{x}$ yields
\begin{equation}
    r(\mathbf{\mathbf{x}} + \bm{\epsilon}) = \mathbf{\mathbf{x}}
    + \frac{\partial r(\mathbf{x})}{\partial \mathbf{x}}^{T}\bm{\epsilon} +
    o(\bm{\epsilon}) \,\, \mathrm{as} \,\, \bm{\epsilon} \to 0.
\end{equation}

\cite{Guillaume2014} shows that the reconstruction $r(\mathbf{x}) - \mathbf{x}$
becomes an estimator of the score when $\|r(\mathbf{x}) - \mathbf{x} \|_{2}$ is small and the contraction parameters $\lambda \to 0$.
Hence around a fixed point we have
\begin{align}
    r(\mathbf{x} + \bm{\epsilon}) - \mathbf{x} &= \bm{\epsilon} \frac{\partial \log(p(\mathbf{x}))}{\partial \mathbf{x}}, \quad \mathrm{and}\\
    \quad \frac{\partial (r(\mathbf{x} + \bm{\epsilon}) - \mathbf{x})}{\partial \mathbf{x}}
    &= \bm{\epsilon} \frac{\partial^{2} \log(p(\mathbf{x}))}{\partial \mathbf{x}^{2}}
        \label{eqn:drdx_ddlogP}
\end{align}
where $I$ is the identity matrix.

By explicitly expressing the Jacobian of the auto-encoder's dynamics $\frac{\partial r(\mathbf{x}) - \mathbf{x}}{\partial x}$
and using the Taylor expansion of $r(\mathbf{x})$, we have 
\begin{equation}
    W^{T}D_{h'}R^{T} - I = \bm{\epsilon} \frac{\partial^{2} \log(p(\mathbf{x}))}{\partial \mathbf{x}^{2}}
    \label{eqn:HesslogP}
\end{equation}

The Hessian of $\log p(\mathbf{x})$ being symmetric, 
Equation~\ref{eqn:HesslogP} illustrates that around fixed points,
$RD_{h^{'}}W$ is symmetric.
In conjunction with {\em Proposition 2}, this shows that untied auto-encoders, when trained using 
a contractive regularizer, are locally conservative. Remark that when the auto-encoder 
is trained with patterns drawn from a continuous family, then auto-encoder forms a
continuous attractor that lies near the examples it is trained on \cite{Seung1998}.

It is worth noting that 
dynamics around fixed points can be
understood by analyzing the eigenvalues of the Jacobian.
The latter being symmetric implies that its eigenvalues cannot have complex
parts, which corresponds to the lack of oscillations one would naturally expect of a conservative
vector field. Moreover, in directions orthogonal to the fixed point, the
eigenvalues of the reconstruction will be negative. Thus the fixed point is
actually a sink.
\subsection{Empirical Conservativeness}
We now empirically analyze the conservativeness of trained untied auto-encoders.
To this end, we train an untied auto-encoder with 500 hidden units with and without
weight length constraints\footnote{Weight length constraints : 
$||\mathbf{w}_i||^2=\alpha$ for all $i=1\cdots H$ and $\alpha$ is a constant term.}
on the MNIST dataset. We measure symmetricity using 
$\text{sym}(A) = \frac{\|(A+A^T)/2\|^2}{\|A\|^2}$
which yields values between $[0,1]$ with $1$ representing complete symmetricity.
\begin{table}
    \centering
    \makeatletter\def\@captype{table}\makeatother
    \caption{Symmeticity of ADW after training AEs with 500 units on MNIST for 100 epochs. We
    denote the auto-encoders with weight length constraints as `+wl'.}
    \label{tab:sym_score}
    \begin{tabular}{|l|l|l|l|l|}
    \hline
     & ReLU & ReLU+wl & sig. & sig.+wl  \\\hline
    AE & 95.9\% & 98.7\% & 95.1\% & 99.1\%\\
    CAE & 95.2\% & 98.6\% & 97.4\% & 99.1\%\\\hline
    \end{tabular}
    \vspace{-0.4cm}
\end{table}
%
Figure~\ref{fig:sym_sig_RDW} and \ref{fig:sym_relu_RDW} shows the evolution of the symmetricity 
of $\frac{\partial r(\mathbf{x})}{\partial \mathbf{x}}=RD_{h'}W$ during training.
For untied auto-encoders, we observe that the Jacobian becomes increasingly
symmetric as training proceeds and hence, by {\em Proposition 2}, the
auto-encoder becomes increasingly conservative.

The contractive auto-encoder tends more towards symmetry than the unregularized auto-encoder.
The reach plateaus around $0.951$ and $0.974$ respectively.
It is interesting to note that auto-encoders with weight length constraints
yield sensibly higher symmetricity scores 
as shown in Table~\ref{tab:sym_score}. 
The details of the experiments and further interpretations are provided in the supplementary material.

\begin{figure*}[t]
    \begin{minipage}{0.33\textwidth}
        \includegraphics[width=1.0\textwidth]{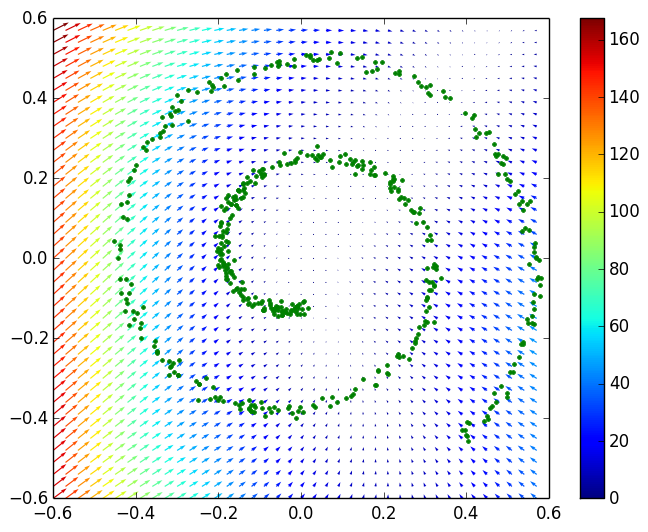}
        \vspace{-0.6cm}
        \subcaption{{\footnotesize Initial vector field}}
    \end{minipage}
    \begin{minipage}{0.33\textwidth}
        \includegraphics[width=1.0\textwidth]{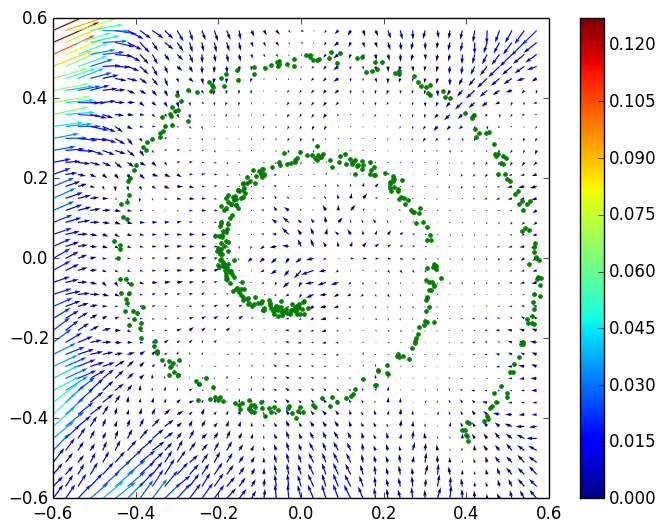}
        \vspace{-0.6cm}
        \subcaption{{\footnotesize Final vector field}}
    \end{minipage}
    \begin{minipage}{0.33\textwidth}
        \includegraphics[width=1.0\textwidth]{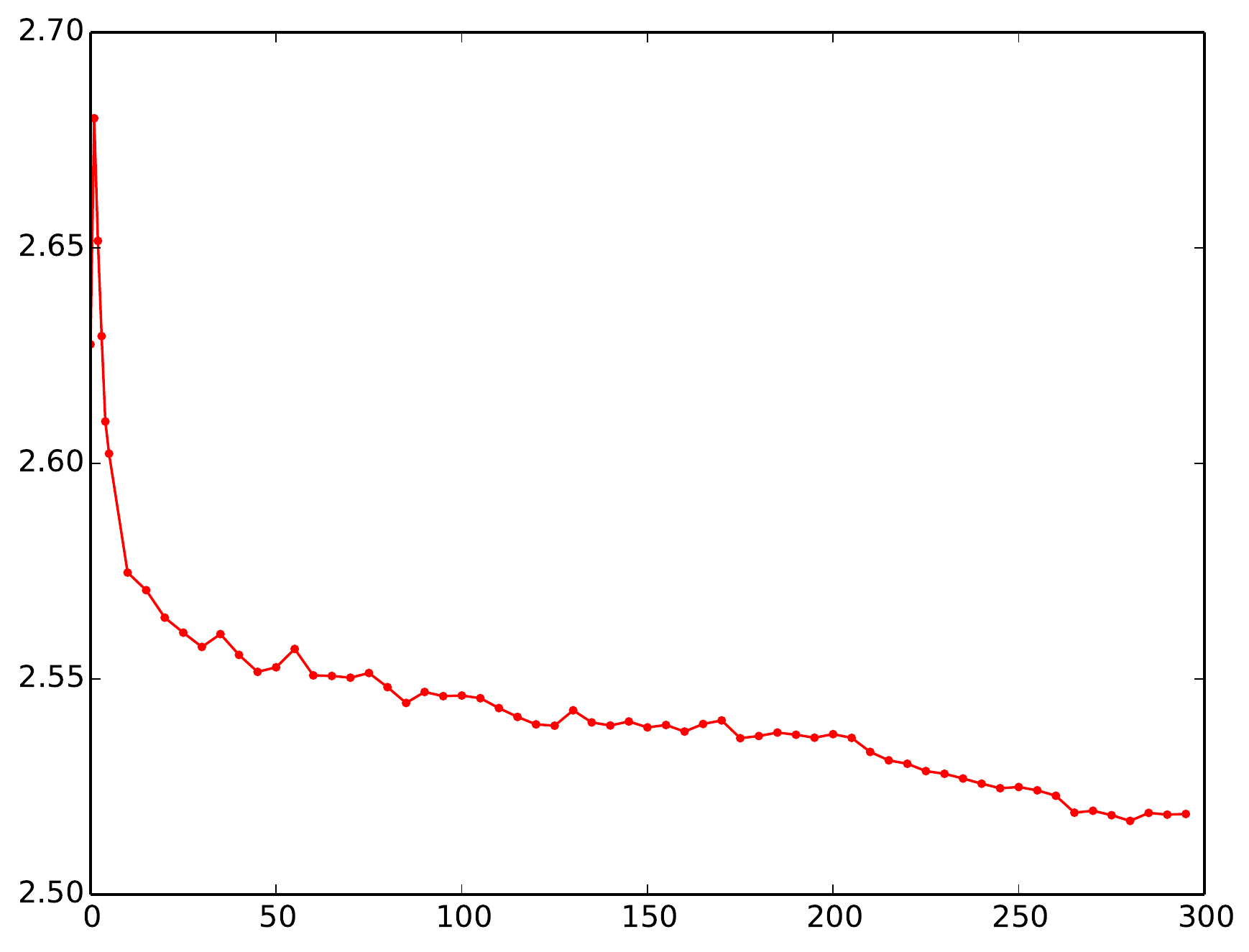}
        \vspace{-0.6cm}
        \subcaption{{\footnotesize Magnitude of curl during training}}
    \end{minipage}\\
    \caption{Initial and final vector field after training untied auto-encoder
         on spiral dataset.}
    \label{fig:spiral_vf}
    \vspace{-0.4cm}
\end{figure*}
To explicitly confirm the conservativeness of the auto-encoder in 2D, we monitor
the curl of the vector field during training.
In our experiments, we created three 2D synthetic datasets by adding gaussian
white noise to the parametrization of a line, a circle, and a spiral.
As shown in Figure~\ref{fig:spiral_vf}, we notice that the curl decrease very
sharply during training, which further demonstrates how 
untied auto-encoders become more conservative during training.
Hence, together with symmetricity measurement and decay of curliness
advocates that vector fields near the data manifold has the tendancy of becoming
conservative.
More results on line, circle, and spiral synthetic datasets can be found in the
supplementary materials.

\subsection{Symmetricity of weights product}
The product of weight $RW^T$ tends to become increasingly
symmetric during training. This behavior is more marked for sigmoid activations than for 
ReLUs as shown in Figures~\ref{fig:sym_sig_RW} and \ref{fig:sym_relu_RW}.
This can be explained by considering the Jacobian symmetricity. We approximately have
\begin{align}
    \sum^{H}_{l=1} ( R_{il}W_{lj} - R_{jl}W_{li} ) h'_l(\mathbf{x}) =0, \forall 1\le i, j \le d
    \label{eqn:nec_cond_repeat}
\end{align}

This implies that the activations of sigmoid hidden units, at least for
training data points, are independent of $h'(\mathbf{x})$ or a constant.

As shown in the supplementary material, most hidden unit activities are concentrated 
in the highest curvature region when training with 
weight length constraints. 
This forces $h_l(\mathbf{x})$ to be concentrated on high curvature regions of the
sigmoid activation. 
This may be due to either 
$h'_l(\mathbf{x})$ being nearly constant for all $l$ given $\mathbf{x}$, 
or $h'_l(\mathbf{x})$ being close to linearly independent.
In both cases, the Jacobian becomes close to the identity and hence $RW^T\approx WR^T$. 
\section{Decomposing the Vector Field}
In this section, we consider finding the closest conservative vector field, 
in a least square sense, to a non-conservative vector field.
Finding this vector field is of great practical importance in many areas of science and
engineering \cite{Bhatia2013}. Here we show that conservative auto-encoders can provide a powerful, 
deep learning based perspective onto this problem. 

The fundamental theorem of vector calculus, also known as Helmhotz
decomposition 
states that any vector field in $\mathbb{R}^3$
can be expressed as the orthogonal sum of an irrotational and a solenoidal
field. The Hodge decomposition is a generalization of this result to high
dimensional space \cite{james1966}. A complete statement of the result requires careful analysis
of boundary conditions as well as differential form formalism. But since 1-forms
correspond to vector field, and our interest lies in the latter, we abuse
notation to state the result in the special case of 1-forms as 
\begin{equation}
    \omega = d\alpha + \delta \beta + \gamma
    \label{eqn:hodge_decomposition}
\end{equation}
where $d$ is the exterior derivative, $\delta$ the co-differential, and $\Delta \gamma = 0$
\footnote{For Laplace-deRham, $\Delta=d\delta+\delta d$. Standard $\Delta$ on 1-forms
is $d\delta$.}.
This means that any 1-form (vector field) can be orthogonally decomposed into a
direct sum of a scalar,
solenoidal, and harmonic components.

This shows that it is always theoretically possible to get the closest
conservative vector field, in a least square sense, to a non-conservative one. 
When applied to auto-encoders, this guarantees the existence of a best
approximate energy function for any untied conservative auto-encoder.
For a more detailed background on the vector field decomposition we refer to the supplementary material.

\begin{figure*}[htp]
    \begin{minipage}{0.32\textwidth}
        \includegraphics[width=1.0\textwidth]{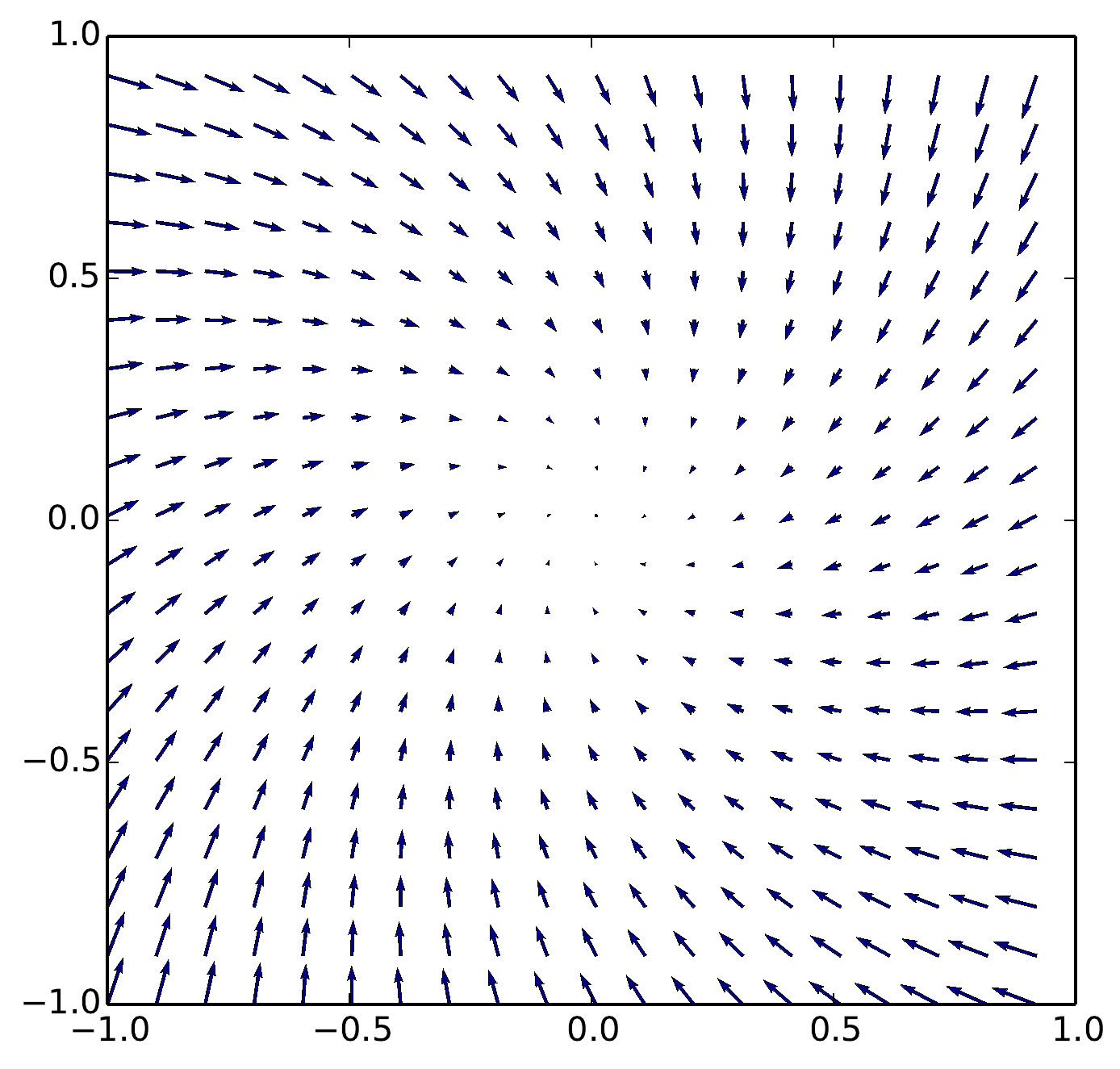}
        \vspace{-0.6cm}
    \end{minipage}
    \begin{minipage}{0.32\textwidth}
        \includegraphics[width=1.0\textwidth]{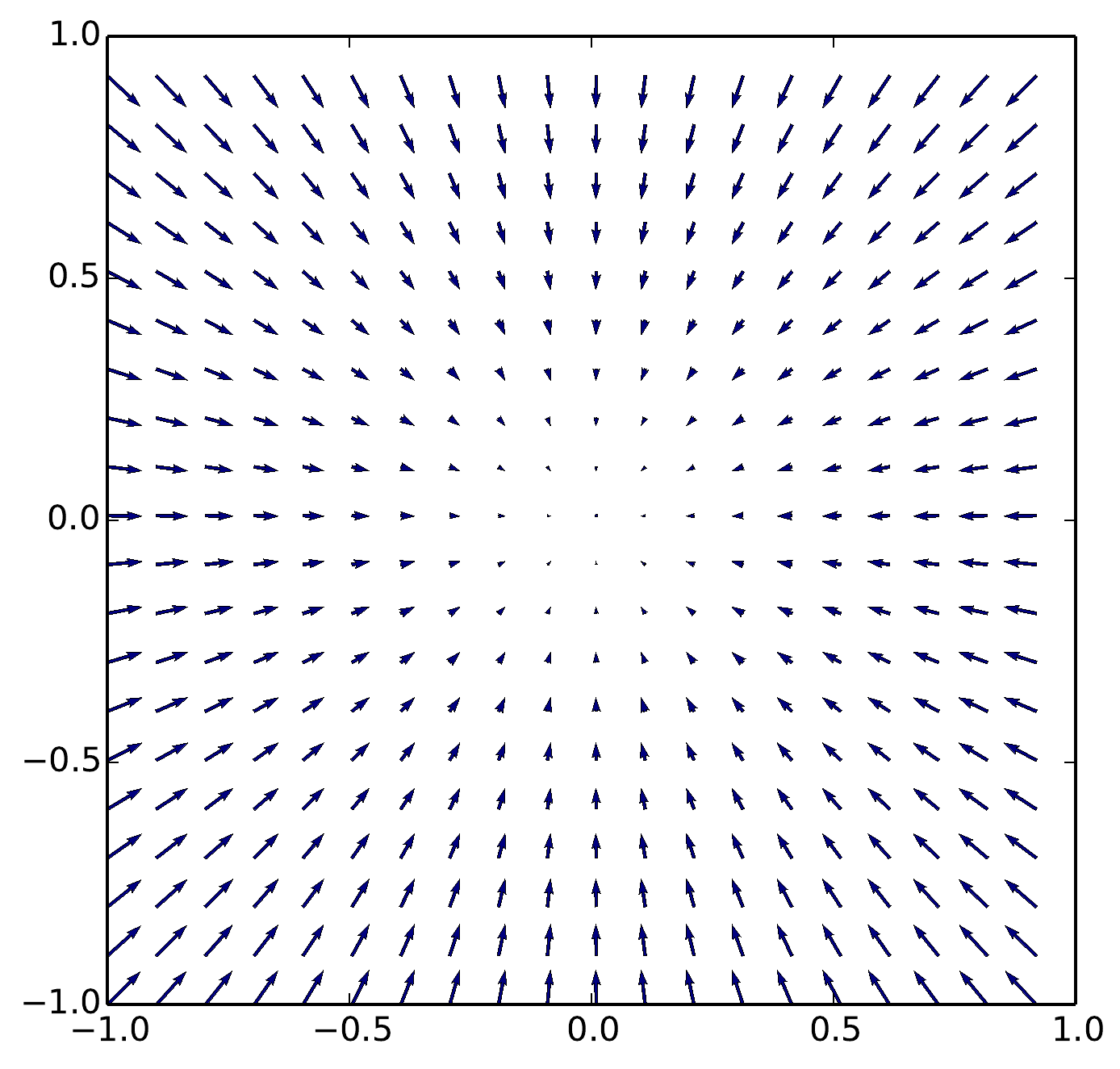}
        \vspace{-0.6cm}
    \end{minipage}
    \begin{minipage}{0.32\textwidth}
        \includegraphics[width=1.0\textwidth]{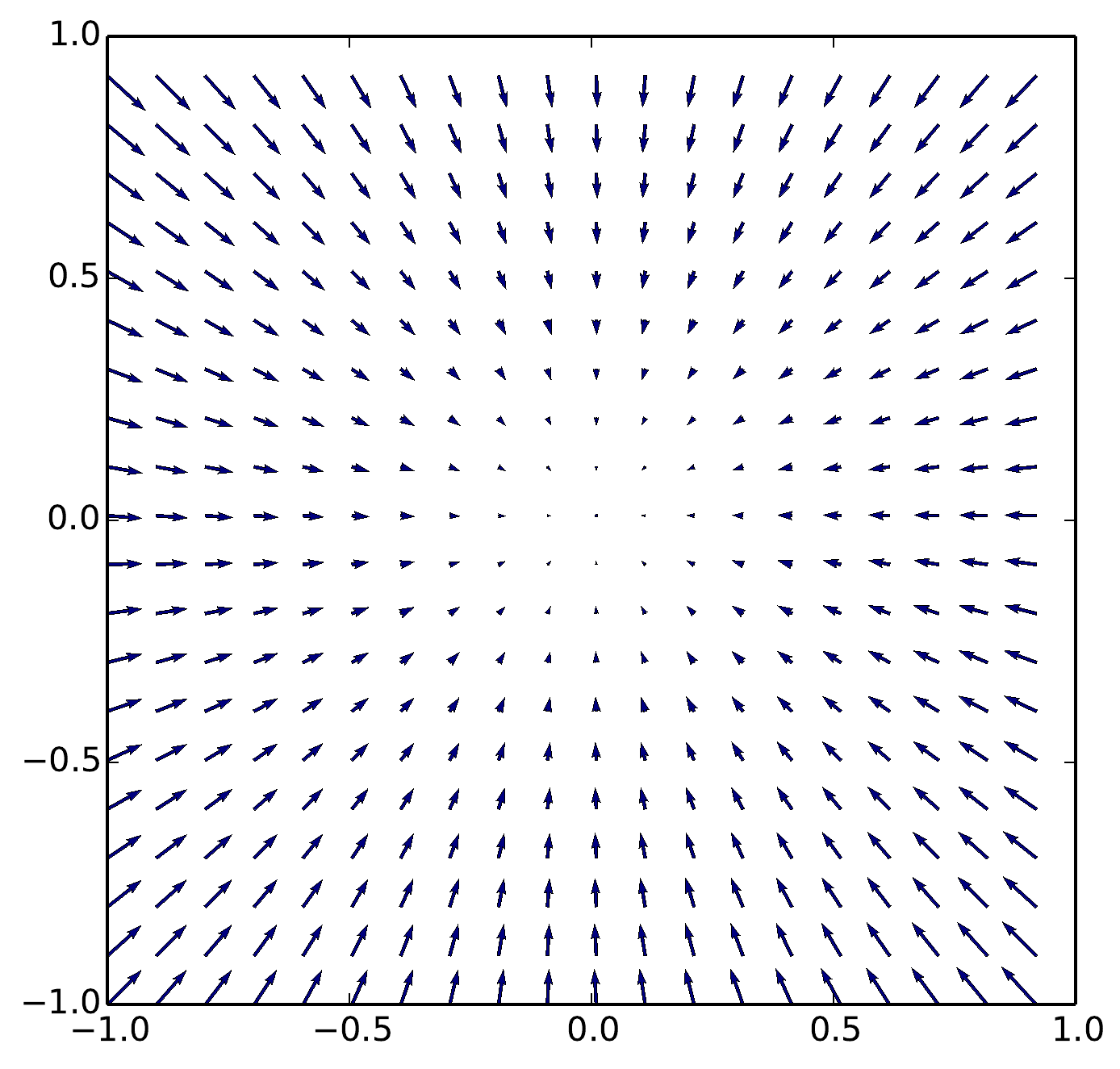}
        \vspace{-0.6cm}
    \end{minipage}\\
    \caption{Vector field learning by tied (Middle) and united (Right) auto-encoder
        on 2D unconservative vector field (Left). }
    \label{fig:hodge_vf}
    \vspace{-0.3cm}
\end{figure*}

\subsection{Extracting the Conservative Vector Field through Learning}
Although the explicit computation of the projection might be theoretically
possible in special cases, we propose to find the best approximate conservative vector 
through \emph{learning}. 
There are several advantages to learning the conservative part of a vector field:  
i) 
Learning the scalar vector field component $\alpha$ from some
vector field $\omega$ with an auto-encoder is straightforward due to the intrinsic tendency of
the trained auto-encoder to become conservative, 
ii) although there is a large body of literature to explicitly compute the projections,  
these methods are highly sensitive to boundary conditions \cite{Bhatia2013},  
while learning based methods eschew this difficulty. 

The advantage of deep learning based methods over existing approaches, such 
as matrix-valued radial basis function kernels \cite{Macedo}, is that they can be trained on 
very large amounts of data. 
To the best of our knowledge, this is the first application of neural
networks to extract the conservative part of any vector field, effectively recovering the 
scalar part of Eq.~\ref{eqn:hodge_decomposition}.
\subsubsection{Two Dimensional space}
As a proof of concept, we first extract the conservative part of a two dimensional
vector field $F(x, y) = (-x + y, -x - y)$. The field corresponds to a spiralling
sink. We train an untied auto-encoder with $1000$ ReLU 
units for $500$ epochs using BFGS over an equally spaced grid of $100$ points in each
dimension. Figure~\ref{fig:hodge_vf} clearly shows that the conservative part is
perfectly recovered.

\subsubsection{High Dimensional space}
We also conducted experiments with high dimensional vector fields. 
We created a continum of vector fields by considering convex combinations of a conservative 
and a non-conservative field.
The former is obtained by training a tied auto-encoder on MNIST and the latter by 
setting the parameters of an auto-encoder to random values. 
That is, we have 
$(W_i, R_i) = \beta (W_0, R_0) + (1-\beta) (W_K,R_K)$
where $(W_0,R_0)$ is the non-conservative auto-encoder
and $(W_K,R_K)$ is the conservative auto-encoder.
We repeatedly train a tied auto-encoder on this continuum in order to learn its conservative part.
The pseudocode for the experiment is presented in Algorithm~\ref{algo:LACV}.
\begin{algorithm}[htp]
    \vspace{-0.1cm}
    \caption{\small{Learning to approximate a conservative field with an auto-encoder}}
    \label{algo:LACV}
    {\small
    \begin{algorithmic}[1]
        \Procedure{}{$\mathcal{D}$ be a data set }
        \State{Let $(W_0, R_0)$ be a random weights for AE.}
        \State{Let $(W_K, R_K)$ be trained AE on $\mathcal{D}$.}
        \State{Generate $F_i$ $\forall i=1\cdots K$ as follows:
            \begin{itemize}
                \item $(W_i,R_i)=\beta (W_0, R_0)+(1-\beta)(W_K,R_K)$
                \item Sample $\mathbf{x}_i$ from uniform distributon in the data space.
                \item $\mathcal{F}_i = \lbrace (\mathbf{x}_i, r(\mathbf{x}_i)) \text{for} i=1\cdots N \rbrace$
            \end{itemize}}
        \For {each vector field $F_i$,}
            \State{Train a tied Auto-encoder on $F_i$}
            \State{Compute $E(\mathbf{x})$ where $ \mathbf{x} \in \mathcal{D}$}
            \State{Compute $E(\mathbf{\tilde{x}})$ where $ \mathbf{\tilde{x}}\sim\text{Binomial} $}
            \State{Count number of $E(\mathbf{x})>E(\mathbf{\tilde{x}})$.}
        \EndFor
        \EndProcedure
    \end{algorithmic}}
    \vspace{-0.1cm}
\end{algorithm}
Figure~\ref{fig:mre_hodges} shows the mean squared error as a function of training epoch 
for different values of $\beta$.
We observe that the auto-encoder's loss function decreases as $\beta$ gets closer to $1$.
This is due to auto-encoder only being able to learn the conservative component of the
vector field. 
%
We then compare the unnormalized model evidence of the auto-encoders. 
The comparison is based on computing the potential energy of auto-encoders given two points at a time.
These two points are from the MNIST and a corrupted version of the latter using salt and pepper noise. 
We validate our experiments by counting
the number of times where $E(\mathbf{x}) > E(\mathbf{x}_{\text{rand}})$. 
Given that the weights $(W_K,R_K)$ of the conservative auto-encoder are obtained by training it on MNIST, the potential energy at MNIST data points should be higher than that at the corrupted MNIST data points. 
However, this does not hold for $\beta < 1$. Even for $\beta=0.6$, we can recover the conservative component of the vector field up to $93\%$ .
Thus, we conclude that the tied auto-encoder is able to learn the conservative component of the vector field.
The procedure is detailed in Algorithm~\ref{algo:LACV}.
\begin{figure}[htp]
    \vspace{-.2cm}
    \begin{minipage}{0.44\textwidth}
    \centering
    \includegraphics[width=1.0\textwidth]{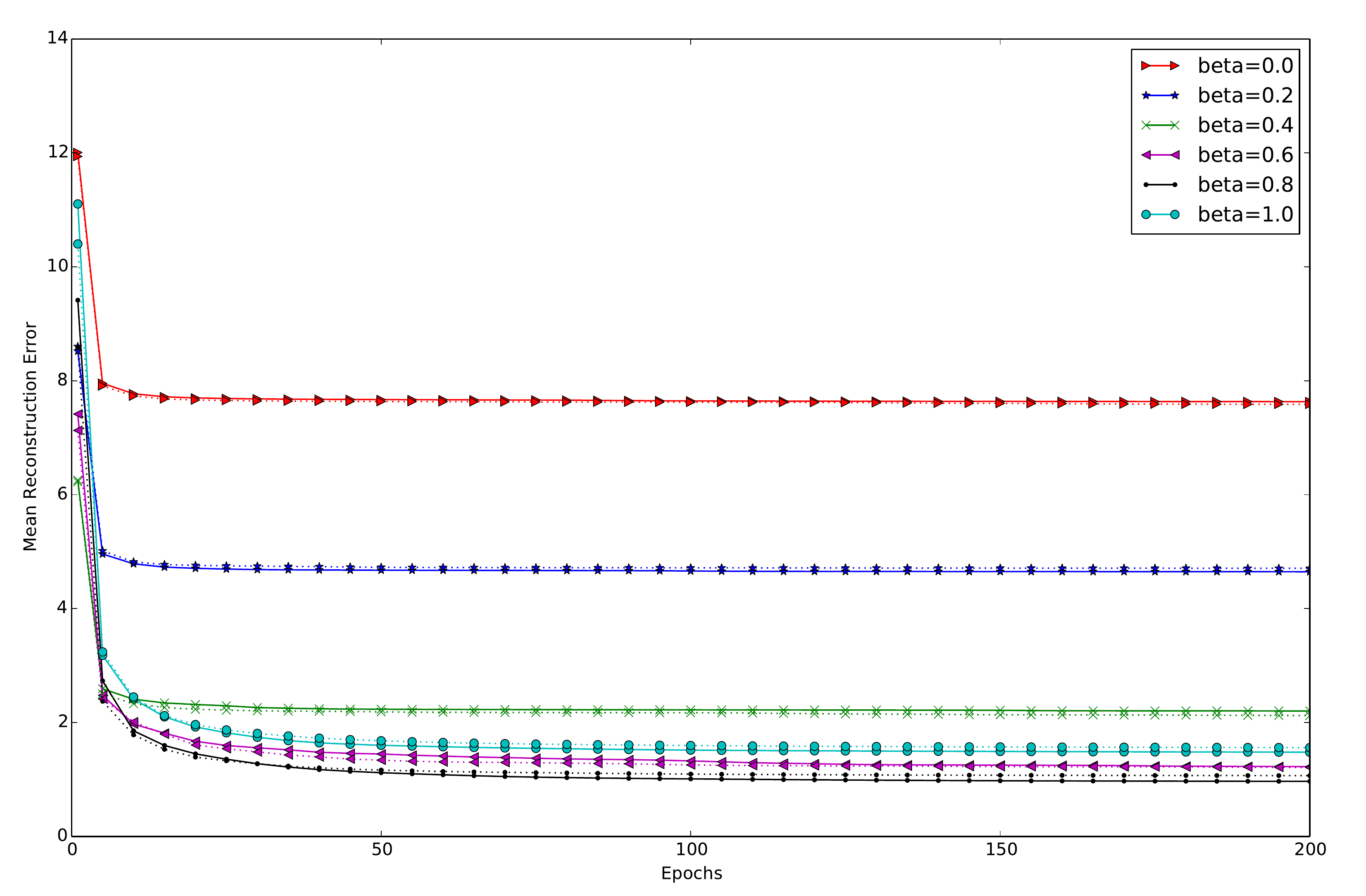}
    \caption{Learning curves for tied (dashed) and untied (solid) auto-encoders.}
    \label{fig:mre_hodges}
    \vspace{-0.5cm}
    \end{minipage}
\end{figure}
\begin{table*}[htp]
    \centering
    \caption{The fraction of observations with
        $E(\mathbf{x}) > E(\mathbf{x}_{\text{rand}})$ for different $\beta$ values.}
    \label{tab:hodge_exp}
    \vspace{-0.1cm}
    \begin{tabular}{ | l | c c c c c r | }
        \hline
        $\beta$           & 0.0 & 0.2 & 0.4 & 0.6 & 0.8 & 1.0 \\\hline
        CVF$=$Tied AE     & 0.5036 & 0.7357 & 0.9338 & 0.98838 & 0.9960 & 0.9968\\
        CVF$=$Untied AE   & 0.5072 & 0.7496 & 0.9373 & 0.98595 & 0.9958 & 0.9968\\
        \hline
    \end{tabular}
    \vspace{-0.4cm}
\end{table*}

Table~\ref{tab:hodge_exp} shows that, on average, the auto-encoders potential energy increasingly favors the original MNIST point over the corrupted ones as the vector field $F_i$ moves from $0$ to $K$.
``CVF$=$Tied AE'' refers to conservative vector field $F_K$ trained by
tied auto-encoder and ``CVF$=$Untied AE'' refers to
conservative vector field $F_K$ trained by untied auto-encoder.
\vspace{-.2cm}
\section{Discussion}
In this paper we derived necessary and sufficient conditions for autoencoders to be conservative, 
and we studied why the Jacobian of the autoencoder tends to become symmetric during training.  
Moreover, we introduced a way to extract the conservative component of a vector field
based on these properties of auto-encoders.  

An interesting direction for future research is the use of annealed importance sampling or 
similar sampling-based approaches to globally normalize the energy function values obtained 
from untied autoencoders. 
Another interesting direction is the use of parameterizations during training that 
will automatically satisfy the sufficient conditions for conservativeness but are less 
restrictive than weight tying.  
\section{Appendix}
\setcounter{proposition}{0}
\subsection{Conservative auto-encoders}
This section provides detailed derivations of {\em Proposition 1}
in Section 3.
\begin{proposition}
\label{generalcondition}
    Consider an $m$-hidden-layer auto-encoder defined as
    \begin{multline}
        r(\mathbf{x}; \theta) = W^{(m)} h^{(m)} 
                \Big( W^{(m-1)} h^{(m-1)} \\ \left(\cdots 
                    W^{(1)} h^{(1)}\left(\mathbf{x}\right)\cdots \Big) + \mathbf{c}^{(m-1)}\right) + \mathbf{c}^{(m)}  , \nonumber
    \end{multline}
    where $\theta = \cup^{m}_{k=0} \theta^{(k)}$ such that
    $\theta^{(k)}=\lbrace W^{(k)}, \mathbf{c}^{(k)} \rbrace$ are the parameters of the model,
    and $h^{(k)}(\cdot)$ is a smooth elementwise activation function at layer $k$. Then the
    auto-encoder is said to be conservative over a smooth simply connect domain $K \subseteq
    \mathbb{R}^{D}$ if and only if
    its reconstruction's Jacobian $\frac{\partial r(\mathbf{x})}{\partial
      \mathbf{x}}$ is symmetric for all $\mathbf{x} \in K$.
\end{proposition}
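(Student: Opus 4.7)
The plan is to reduce the statement to a standard fact from vector calculus, namely that on a smooth simply connected domain a $C^1$ vector field is the gradient of a scalar function if and only if its Jacobian is symmetric (equivalently, the associated 1-form is closed). First I would observe that, since the identity map $\mathbf{x}\mapsto\mathbf{x}$ is itself the gradient of $\tfrac12\|\mathbf{x}\|_2^2$ and has the symmetric Jacobian $I$, the vector field $r(\mathbf{x})-\mathbf{x}$ is conservative on $K$ iff $r(\mathbf{x})$ is, and its Jacobian is symmetric on $K$ iff $\partial r/\partial\mathbf{x}$ is. So it suffices to prove the equivalence for $r$ itself, treated as a $C^\infty$ map $K\to\mathbb{R}^D$ (smoothness is inherited from the smooth activations $h^{(k)}$ by the chain rule).

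For the forward ($\Rightarrow$) direction I would assume the existence of a potential $F\in C^2(K)$ with $r(\mathbf{x})-\mathbf{x}=\nabla F(\mathbf{x})$. Componentwise this gives $r_i(\mathbf{x})-x_i=\partial F/\partial x_i$, so
\begin{equation*}
\frac{\partial (r_i-x_i)}{\partial x_j}=\frac{\partial^2 F}{\partial x_j\partial x_i}.
\end{equation*}
Clairaut's theorem, applicable because $F$ is at least $C^2$ on the open set $K$, then yields symmetry of the matrix of mixed partials and hence of $\partial r/\partial\mathbf{x}$.

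For the converse ($\Leftarrow$) I would pass to the language of differential forms and invoke the Poincaré lemma. Define the 1-form $\omega=\sum_{i=1}^{D}\bigl(r_i(\mathbf{x})-x_i\bigr)\,dx_i$ on $K$. A direct computation gives
\begin{equation*}
d\omega=\sum_{i<j}\left(\frac{\partial (r_j-x_j)}{\partial x_i}-\frac{\partial (r_i-x_i)}{\partial x_j}\right)dx_i\wedge dx_j,
\end{equation*}
so symmetry of the Jacobian on all of $K$ is exactly the statement $d\omega=0$, i.e.\ $\omega$ is closed. Since $K$ is smooth and simply connected, the Poincaré lemma for 1-forms (first de Rham cohomology vanishes) guarantees that every closed 1-form on $K$ is exact: there exists $F\in C^\infty(K)$ with $dF=\omega$, equivalently $\nabla F=r-\mathbf{x}$. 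An explicit $F$ can even be written as a path integral $F(\mathbf{x})=\int_{\mathbf{x}_0}^{\mathbf{x}}\omega$, which is well defined precisely because closedness plus simple connectedness makes the integral path-independent.

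The main obstacle, if any, is handling the topological hypothesis cleanly: one has to justify that simple connectedness is exactly the right condition, and to address the remark in the body about activations with cusps (e.g.\ ReLU). I would note that outside a measure-zero set of kinks $r$ is $C^\infty$, and the argument applies on each open region where $r$ is smooth; since such a region's complement is not a continuum, the simply connected hypothesis carries through and no obstruction arises. Everything else is a matter of carefully verifying the smoothness needed to invoke Clairaut and Poincaré, which follows from the elementwise smoothness assumption on the $h^{(k)}$.
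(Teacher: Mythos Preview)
Your proposal is correct and follows essentially the same route as the paper: both reduce the claim to the standard fact that a smooth $1$-form on a simply connected domain is exact iff it is closed, compute the exterior derivative of the $1$-form associated with the reconstruction, and invoke the Poincar\'e lemma. The only cosmetic differences are that you work with $r-\mathbf{x}$ (after noting equivalence with $r$) while the paper works with $r$ directly, and you split the two implications explicitly (Clairaut for $\Rightarrow$, Poincar\'e for $\Leftarrow$) whereas the paper bundles both into a single appeal to Poincar\'e's theorem.
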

The high level idea is that simply finding the anti-derivative of an auto-encoder vector field
as proposed in \cite{Kamyshanska2013} does not work for untied auto-encoders. This is
due to the difference in solving first order ordinary differential equations for tied
auto-encoders and first order partial differential equations for untied
auto-encoders. Therefore, here we present a different approach that
uses differential forms to facilitate the derivation of the
existence condition of a potential energy function in the case of untied
auto-encoders.

The advantage of differential forms is that they allow us to work with a generalized, 
coordinate free system.
A differential form $\alpha$ of degree $l$ ($l$-form) on a smooth domain
$K\subseteq \mathbb{R}^d$ is an expression:
\begin{equation}
    \alpha = \sum^D_{i=1} f_idx_i.
\end{equation}
Using differential form algebra and exterior derivatives, we can show that the 
1-form implied by an untied auto-encoder is exact, which means that $\alpha$
can be expressed as
$\alpha=d\beta$ for some $\beta \in \Lambda^{l-1}(K)$.
Let $\alpha$ be the 1-form implied by the vector field of an untied auto-encoder.
Then, we have
\begin{align}
    \alpha  = \sum^D_{i=1}r_i dx_i, \text{ and }
    d\alpha = \sum^D_{i=1} d(r_i \wedge dx_i)
\end{align}
where $\wedge$ is the exterior multiplication, $d$ is the differential operatior
on differential forms, and $r(\cdot)$ is the reconstruction function of the auto-encoder.
Based on the exterior derivative properties, i) if $f\in\Lambda^0(K)$ then
$df = \sum^D_{i=1} \frac{\partial f}{\partial x_i} dx_i$ and
ii) if $\alpha\in\Lambda^l(K)$ and $\beta\in\Lambda^m(K)$ \cite{Edelen2011}
then $\alpha \beta = (-1)^{lm}\beta\alpha$,
\begin{align}
    d\alpha &= \sum^D_{i=1} d(r_i \wedge dx_i)\\
    &= \sum^D_{i,j=1} \frac{\partial r_i}{\partial x_j} ( dx_j \wedge dx_i)\\
    &= -\sum_{1\leq i< j < D} \frac{\partial r_i}{\partial x_j}dx_i \wedge dx_j +
        \sum_{1\leq i< j < D} \frac{\partial r_j}{\partial x_i}dx_i \wedge dx_j\\
    &= \sum_{1\leq i< j < D} \left(\frac{\partial r_i}{\partial x_j}
    - \frac{\partial r_j}{\partial x_i}\right) dx_i \wedge dx_j
\end{align}
According to the Poincare's theorem, which states that every exact form is closed
and convsersely, if $\alpha$ is closed then it is exact
in a simply connected region and $\alpha \in \Lambda^l(K)$,
where $\alpha$ is closed if $d\alpha=0$. Then, by Poincare's theorem,
we see that
\begin{align}
    d\alpha = \sum_{1\leq i< j < D} \left(\frac{\partial r_i}{\partial x_j}
    - \frac{\partial r_j}{\partial x_i}\right) dx_i \wedge dx_j = 0
\end{align}
This is equivalent to requiring the Jacobian to be symmetric for all
$\mathbf{x} \in K$.
{\small
\bibliography{conservativeness_ae}
\bibliographystyle{aaai}
}

\end{document}